\newcommand{\RN}[1]{%
	\textup{\lowercase\expandafter{\it \romannumeral#1}}%
}
\newtheorem{theorem}{Theorem}[section]
\newtheorem{lemma}[theorem]{Lemma}
\newtheorem*{remark}{Remark}
\newenvironment{nalign}{
    \begin{equation}
    \begin{aligned}
}{
    \end{aligned}
    \end{equation}
}
\newcommand{\bbR}{\mathbb{R}}
\newcommand{\bbE}{\mathbb{E}}
\newcommand{\calC}{\mathcal{C}}
\newcommand{\calM}{\mathcal{M}}
\newcommand{\tcalM}{\tilde{\calM}}
\newcommand{\ud}{\mathrm{d}}
\newcommand{\id}{\mathrm{id}}
\newcommand{\trs}{^{\top}}
\newcommand{\wrt}{w.r.t.~}
\def\1{\bm{1}}
\def\vmu{{\bm{\mu}}}
\def\vtheta{{\bm{\theta}}}
\def\vsigma{{\bm{\sigma}}}
\def\vp{{\bm{p}}}
\def\vv{{\bm{v}}}
\def\vx{{\bm{x}}}
\def\vy{{\bm{y}}}
\def\vz{{\bm{z}}}
\DeclareMathAlphabet{\mathsfit}{\encodingdefault}{\sfdefault}{m}{sl}
\SetMathAlphabet{\mathsfit}{bold}{\encodingdefault}{\sfdefault}{bx}{n}
\DeclareMathOperator*{\argmin}{arg\,min}
\newcommand{\upmu}{\tilde{\mu}}
\newcommand{\slr}{\varepsilon}
\newcommand{\upz}{\tilde{z}}
\title{Straight-Through Estimator as \\ Projected Wasserstein Gradient Flow}
\author{
Pengyu Cheng$^1$, Chang Liu$^2$, Chunyuan Li$^3$, \\
{\bf Dinghan Shen$^1$, Ricardo Henao$^1$ and Lawrence Carin$^1$} \\
{$^1$Duke University,~$^2$Tsinghua University,~$^3$Microsoft Research}\\
\texttt{pengyu.cheng@duke.edu}
}
\begin{document}
\maketitle
%

%

\begin{abstract}
The Straight-Through (ST) estimator is a widely used technique for back-propagating gradients through discrete random variables.
However, this effective method lacks theoretical justification.
In this paper, we show that ST can be interpreted as the simulation of the projected Wasserstein gradient flow (pWGF). 
Based upon this understanding, a theoretical foundation is established to justify the convergence properties of ST.
Further, another pWGF estimator variant is proposed, which exhibits superior performance on distributions with infinite support, \emph{e.g.}, Poisson distributions.
Empirically, we show that ST and our proposed estimator, while applied to different types of discrete structures (including both Bernoulli and Poisson latent variables), exhibit comparable or even better performances relative to other state-of-the-art methods. 
Our results uncover the origin of the widespread adoption of ST estimator, and represent a helpful step towards exploring alternative gradient estimators for discrete variables.

\end{abstract}

\section{Introduction}
Learning distributions in discrete domains is a fundamental problem  in machine learning.
This problem can be formulated in general as minimizing the following expected cost
\begin{align}\label{eq:obj}
    L(\vtheta)=\mathbb{E}_{\vz \sim p_\vtheta}[f(\vz)],
\end{align}
where $f(\vz)$ is the cost function, $\vz$ is a discrete (latent) random variable whose distribution $p_\vtheta$ is parameterized by $\vtheta$.
Typically, $\vtheta$ is obtained as the output of a Neural Network (NN), whose weights are learned by backpropagating the gradients through discrete random variables $\vz$.
%
%
%

In practice, direct gradient computations through the discrete random variables, $\nabla_{\vtheta}L(\vtheta) = \sum_{\vz} \nabla_{\vtheta}p_{\vtheta}(\vz) f(\vz)$ suffers from the curse of dimensionality, since it requires traversing through all possible joint configurations of the latent variable, whose number is exponentially large \wrt the latent dimension.
Due to this limitation, existing approaches resort to estimating the gradient $\nabla_\vtheta L(\vtheta)$ by approximating its expectation, where Monte Carlo sampling methods are typically employed.

The Straight-Through (ST) estimator \citep{hinton2012neural,bengio2013estimating}  is a widely applied method due to its simplicity and effectiveness. 
 The idea of ST is directly using the gradients of discrete samples as the gradients of the distribution parameters. Since discrete samples can be generated as the output of hard threshold functions with distribution parameters as input, Bengio et al \citep{bengio2013estimating} explain the ST estimator by set the gradients of hard threshold functions to $1$. 
 %
 However, this explanation lacks theoretical justification for the gradients of hard threshold functions. 


In this paper, we show that ST can be interpreted as simulating the projected Wasserstein gradient flow (pWGF) of a functional $F[\mu]:= \bbE_{\vz\sim \mu}[f(\vz)]$, where $\mu$ is a distribution in the target discrete distribution family with density $p_\vtheta$ parameterized by $\vtheta$.
Further, a more general optimizing scheme for \eqref{eq:obj} is introduced. Instead of directly updating $\mu$ in the discrete distribution family, $\mu$ is first updated to $\upmu$ on a larger Wasserstein distribution space where gradients are easier to compute.
Then, we project $\upmu$ back to the discrete distribution family $\calM$ as the updated distribution. Moreover, the projection follows the descending direction of $F[\cdot]$ in $\calM$, which justifies the effectiveness of ST.
This pWGF based updating scheme also motivates another variant that achieves faster convergence when the desired family of distributions has infinite support, \emph{e.g.}, Poisson.

\vspace{-2mm}
\section{Proposed Algorithm}
\vspace{-2mm}
Denote  $\calM= \{\mu :  \text{density of $\mu$ has the form of $p_\vtheta$}  \}$ as the $d$-dimensional discrete distributions family parameterized by $\vtheta$. With $F[\mu]:= \bbE_{\vz\sim \mu}[f(\vz)]$, the task \eqref{eq:obj} can be rewritten as
\begin{align} \label{obj-functional}
 \min_{\vtheta}\mathbb{E}_{\vz \sim p_\vtheta}[f(\vz)]=	\min_{\mu \in \calM} \bbE_{\vz \sim \mu} [f(\vz)] = \min_{\mu \in \calM}F[\mu],
\end{align}
where $f(\cdot)$ is assumed to be differentiable. To solve \eqref{obj-functional}, directly calculating the gradient $\nabla_{\calM} F$ is challenging, because the discrete distribution family $\calM$ is very restrictive on the gradients.
Alternatively, if we relax the discrete constraint and perform updates in an appropriate larger space $\tcalM$, the calculation of the gradient $\nabla_{\tcalM} F$ can be much easier. Therefore, as showed in Fig.~\ref{updating_algorithm}, in $k$-th updating iteration, we consider first updating the current distribution $\mu_k$ to $\upmu_k$ with stepsize $\varepsilon$ in a larger 2-Wasserstein space $\tcalM$ \citep{villani2008optimal}, then projecting $\upmu_k$ back to $\calM$ as updated discrete distribution $\mu_{k+1}$. Theorem \ref{converge-thm} in supplement guarantees that  our updating scheme converges with a small enough step size $\slr$. 


\begin{minipage}{\textwidth}
\begin{minipage}{0.48\textwidth}
	\centering
	\includegraphics[width= .8\linewidth]{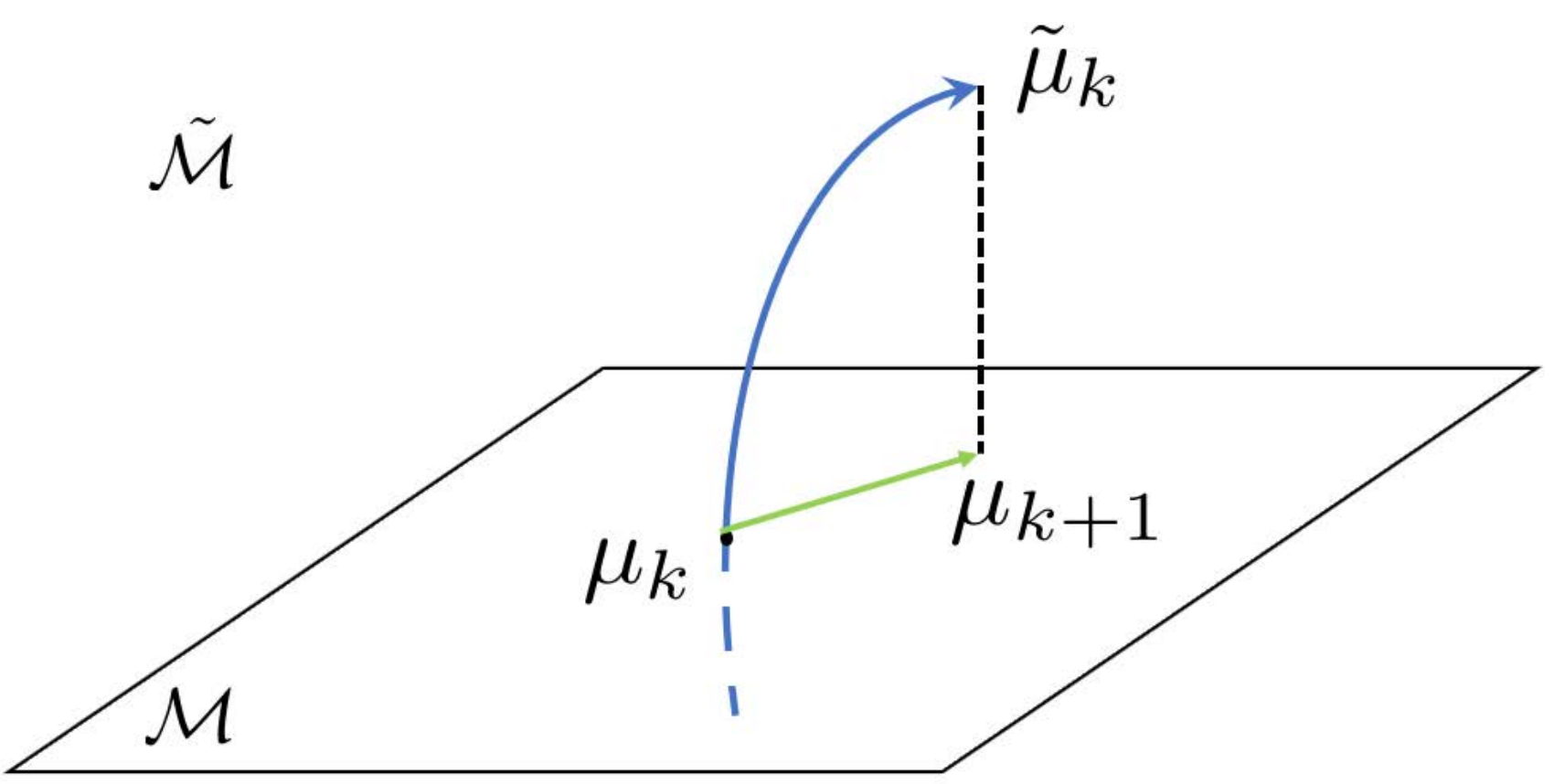}
    \captionof{figure}{Updating scheme}
	\label{updating_algorithm}
%
\hspace{-10mm}
\end{minipage}
\begin{minipage}{.45\textwidth}
	\centering
	\includegraphics[width=1.0\linewidth]{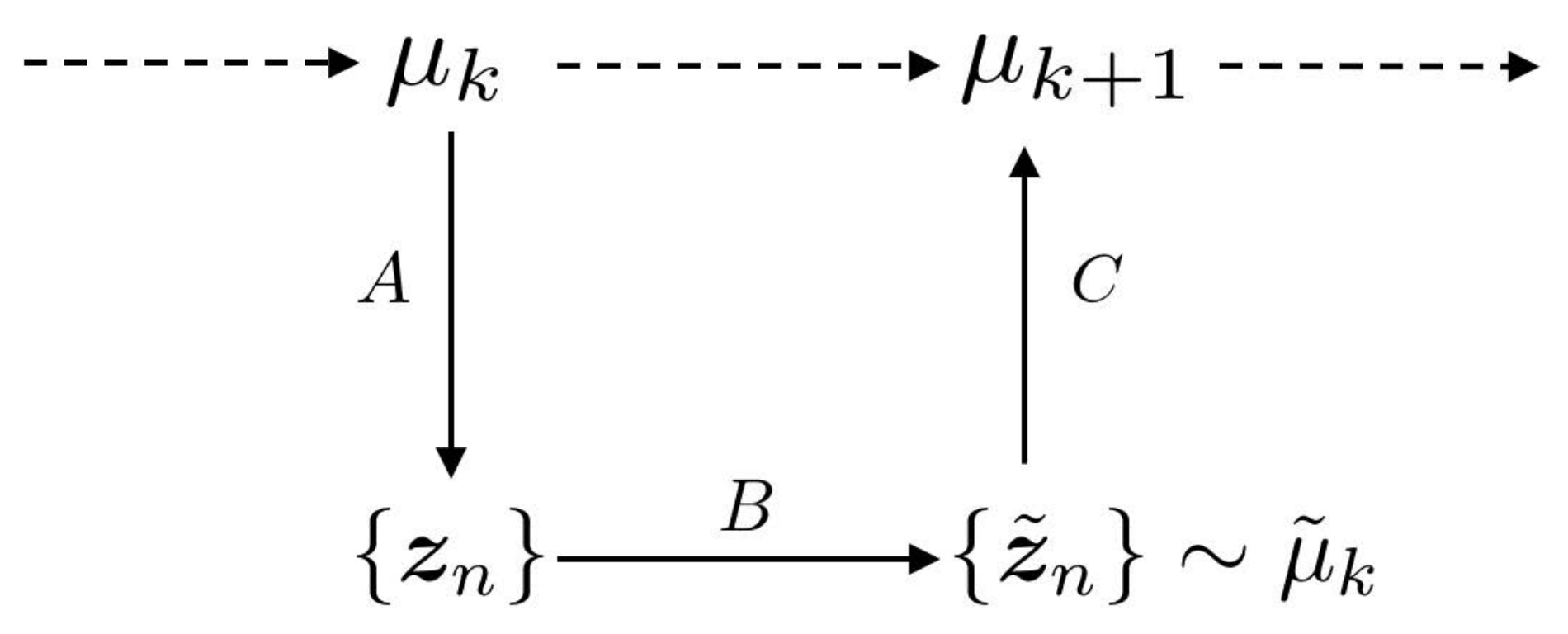}
	\captionof{figure}{Algorithm outline}
	\label{updating_sample}
\end{minipage}
\end{minipage}


With Wasserstein gradient flow (WGF) \citep{villani2008optimal}, we show (in Appendix) that, the gradient in larger space $\tcalM$ as $\nabla_{\tcalM} F = \nabla f$, which means, if $\mu_k$ is represented by a group of its samples $\{\vz_n\}_{n=1}^N$, then $\{\tilde{\vz}_n\}=\{\vz_n + \varepsilon \nabla f(\vz_n)\}$ can be treated as a group of sample from $\upmu_k$. Therefore, we can update $\mu_k$ to $\upmu_k$ along the WGF simply by updating its samples. 
%
%
%
To project $\upmu_k$ back to $\calM$ as $\mu_{k+1}$, we need to solve
$\mu_{k+1} = \argmin_{\mu \in \mathcal{M}} W(\mu,\upmu_k)$, which is equivalent to solve $ \argmin_{\mu}W^2(\mu,\upmu_k)$, where $W^2(\cdot,\cdot)$ is the square of the 2-Wasserstein distance \citep{villani2008optimal}. Consequently, our pWGF algorithm proceeds in 3 steps shown in Fig.~\ref{updating_sample}: (A) draw samples $\{\bm{z}_n\}$ from current distribution $\mu_k$ ; (B) update $\{\bm{z}_n\}$ to $\{\tilde{\bm{z}}_n\}$ as samples from $\upmu_k$; (C) project $\upmu_k$ back to $\mu_{k+1}$ by minimizing Wasserstein distance.

Since distributions in $\calM$ are multidimensional, the exact Wasserstein distance is difficult to derive.
We make a standard assumption~\citep{chen2016infogan} that $\mu$ and $\upmu_k$ are factorized distributions. With the assumption, we prove in Theorem~\ref{joint-eq-marginal} that minimizing Wasserstein distance between factorized distributions is equivalent to minimizing the marginal distance on every dimension. Therefore, for simplicity, we describe our projection step using one-dimensional distributions. As the updated distribution $\upmu_k$ is implicit, we can not obtain the closed form of Wasserstein distance $W^2(\mu_k$,$\tilde{\mu}_k)$.
Therefore, we consider two approximations of $W(\mu_k,\upmu_k)$. 
\begin{theorem}\label{joint-eq-marginal}
If $d$-dimensional distributions $\mu$ and $\nu$ are factorized, then $W^2(\mu,\nu) = \sum_{i=1}^d W^2(\mu^{(i)},\nu^{(i)})$, where $\mu^{(i)}$  and $\nu^{(i)}$ are the marginal distributions of $\mu$ and $\nu$ respectively.
\end{theorem}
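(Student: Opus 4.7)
The plan is to work from the Kantorovich formulation
\[
W^2(\mu,\nu) = \inf_{\gamma \in \Gamma(\mu,\nu)} \int \|\vx-\vy\|^2 \, d\gamma(\vx,\vy),
\]
where $\Gamma(\mu,\nu)$ denotes the set of couplings with marginals $\mu$ and $\nu$, and to establish the claimed equality by proving the two inequalities separately. Only one direction actually relies on the factorization hypothesis.

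For the upper bound $W^2(\mu,\nu) \leq \sum_{i=1}^d W^2(\mu^{(i)},\nu^{(i)})$, I would construct an explicit coupling. Let $\gamma^{(i)}$ be an optimal coupling of $\mu^{(i)}$ and $\nu^{(i)}$ for each $i$, and form the product coupling $\gamma := \bigotimes_{i=1}^{d} \gamma^{(i)}$, viewed as a measure on $\bbR^d \times \bbR^d$ after regrouping the coordinates. Because $\mu = \bigotimes_i \mu^{(i)}$ and $\nu = \bigotimes_i \nu^{(i)}$ by the factorization hypothesis, the two marginals of $\gamma$ are precisely $\mu$ and $\nu$, so $\gamma \in \Gamma(\mu,\nu)$. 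Expanding $\|\vx-\vy\|^2 = \sum_i (x_i - y_i)^2$ and applying Fubini to the product measure yields
\[
\int \|\vx-\vy\|^2 \, d\gamma = \sum_{i=1}^{d} \int (x_i - y_i)^2 \, d\gamma^{(i)} = \sum_{i=1}^{d} W^2(\mu^{(i)},\nu^{(i)}),
\]
which bounds $W^2(\mu,\nu)$ from above by the right-hand side of the claim.

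For the lower bound, I would take an arbitrary coupling $\gamma \in \Gamma(\mu,\nu)$ and let $\gamma^{(i)}$ be its pushforward under the coordinate-pair projection $(\vx,\vy)\mapsto(x_i,y_i)$. A direct check shows $\gamma^{(i)} \in \Gamma(\mu^{(i)},\nu^{(i)})$ regardless of whether $\mu$ and $\nu$ are factorized, and by change of variables
\[
\int \|\vx-\vy\|^2 \, d\gamma = \sum_{i=1}^{d} \int (x_i-y_i)^2 \, d\gamma^{(i)} \geq \sum_{i=1}^{d} W^2(\mu^{(i)},\nu^{(i)}).
\]
Taking the infimum over $\gamma$ produces the matching lower bound, completing the proof.

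The only subtlety I anticipate is the upper-bound step: one must verify that the product of the marginal optimal couplings indeed has $\mu$ and $\nu$ as its joint marginals, and this is exactly where the factorization hypothesis is used (in its absence the identity fails, and only the $\geq$ direction survives). The rest is routine bookkeeping with Fubini and the definition of pushforward, so I do not foresee any further technical obstacle.
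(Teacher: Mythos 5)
Your proposal is correct and follows essentially the same route as the paper's proof: the product coupling of the one-dimensional couplings gives the upper bound (this is where factorization enters), and projecting an arbitrary coupling onto each coordinate pair gives the lower bound. The only cosmetic difference is that you invoke the existence of optimal one-dimensional couplings up front, whereas the paper works with arbitrary couplings and takes the infimum at the end, which sidesteps the (standard but unstated) attainment argument.
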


\vspace{-2mm}
\subsection{ST estimator: Absolute Difference of Expectation}
\vspace{-2mm}
We find that the Straight-Through (ST) estimator~\citep{bengio2013estimating} is a special case of pWGF, when the Wasserstein distance is approximated via its lower bound, absolute difference of expectations.
\begin{theorem}\label{lower_bound_Wd}
For two one-dimensional distributions $\mu,\nu \in \tcalM$, the absolute difference between $\bbE_{\mu} = \bbE_{x \sim \mu} [x]$ and $\bbE_{\nu} = \bbE_{y \sim \nu}[y]$ is a lower bound of $W(\mu,\nu)$, i.e. $\left|\bbE_\mu - \bbE_\nu\right| \leq W(\mu,\nu).$
\end{theorem}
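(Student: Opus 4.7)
The plan is to work directly from the definition of the 2-Wasserstein distance as an infimum over couplings. Writing $W(\mu,\nu) = \inf_{\pi \in \Pi(\mu,\nu)} \bigl(\int |x-y|^2 \dd\pi(x,y)\bigr)^{1/2}$, where $\Pi(\mu,\nu)$ denotes the set of joint distributions on $\bbR \times \bbR$ with marginals $\mu$ and $\nu$, I would fix an arbitrary coupling $\pi$ and show that $|\bbE_\mu - \bbE_\nu|$ is bounded by its cost, then take the infimum.

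First I would use the marginal property of any coupling $\pi \in \Pi(\mu,\nu)$ to rewrite the difference of expectations as a single integral against $\pi$:
\begin{equation*}
\bbE_\mu - \bbE_\nu = \int x \dd\mu(x) - \int y \dd\nu(y) = \int (x-y) \dd\pi(x,y).
\end{equation*}
Next I would apply the triangle inequality for integrals to pass the absolute value inside, giving $|\bbE_\mu - \bbE_\nu| \leq \int |x-y| \dd\pi(x,y)$. Then I would use Jensen's inequality (or equivalently Cauchy--Schwarz against the probability measure $\pi$) on the convex function $t \mapsto t^2$ to obtain
\begin{equation*}
\int |x-y| \dd\pi(x,y) \leq \Bigl(\int |x-y|^2 \dd\pi(x,y)\Bigr)^{1/2}.
\end{equation*}
Combining these two bounds yields $|\bbE_\mu - \bbE_\nu| \leq \bigl(\int |x-y|^2 \dd\pi\bigr)^{1/2}$ for every coupling $\pi$. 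Since the left-hand side does not depend on $\pi$, I can take the infimum over $\pi \in \Pi(\mu,\nu)$ on the right, which by definition equals $W(\mu,\nu)$, completing the argument.

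There is essentially no hard step here; the only mild subtlety is ensuring that $\mu$ and $\nu$ have finite first moments so that the expectations exist and the passage from $|\bbE_\mu - \bbE_\nu|$ to the coupling integral is valid. Since $\mu, \nu \in \tcalM$ is the 2-Wasserstein space, finite second moments are guaranteed, which implies finite first moments by Cauchy--Schwarz, so the manipulation is justified. I would note this integrability remark briefly at the start of the proof and then proceed with the three-line chain above.
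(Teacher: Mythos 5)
Your proof is correct: the identity $\bbE_\mu - \bbE_\nu = \int (x-y)\dd\pi(x,y)$ holds for any coupling $\pi$ by the marginal property, the triangle inequality and Jensen (or Cauchy--Schwarz) give $|\bbE_\mu - \bbE_\nu| \le \bigl(\int |x-y|^2 \dd\pi\bigr)^{1/2}$, and taking the infimum over couplings yields the claim; your integrability remark (finite second moments in $\tcalM$ imply finite first moments) is exactly the right thing to check. However, your route is genuinely different from the paper's, and in fact more complete. The paper's proof attached to this statement only treats the case $\mu = \mathrm{Bern}(p)$, $\nu = \mathrm{Bern}(q)$: it writes $W^2(\mu,\nu)$ as an explicit linear program over the four coupling weights $a_{i,j}$ and solves it to get $W^2(\mu,\nu) = |p-q|$. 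That computation really establishes the sharper Bernoulli identity stated in the paper's Remark (from which the inequality follows only because $|p-q| \le 1$), but it does not prove the theorem as stated for arbitrary one-dimensional distributions in $\tcalM$. Your coupling-plus-Jensen argument is the standard general proof and covers the full claim (including, e.g., the Poisson distributions the paper cares about); what it does not give you is the exact value of $W$ in the Bernoulli case, which the paper's linear-programming computation does and which the paper uses to argue that the ST update direction is exact (not merely a bound) for Bernoulli variables. Both arguments are worth keeping: yours to justify the theorem in general, the paper's to justify the Remark.
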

\begin{remark}\label{distance-bernoulli}
If $\mu$ and $\nu$ are Bernoulli,  then $W^2(\mu,\nu)=\left|\bbE_{ \mu} - \bbE_{\nu}\right|$, which means minimizing the expectation difference is equivalent to minimizing the 2-Wasserstein distance under Bernoulli cases. 
\end{remark}
For one-dimensional Bernoulli distribution, $\mu_k \sim \text{Bern}(p)$, noting that $p = \bbE_{\mu_k} \approx \frac{1}{N} \sum_{n=1}^N z_n$ and $\bbE_{\upmu_k} \approx \frac{1}{N} \sum_{n=1}^N \tilde{z}_n $, we approximate the parameter gradient by:
%
$	\nabla_p W^2({\mu _k},{\upmu_k}) \approx \nabla_p (\bbE_{z_k\sim\mu_k}[z_k]-\bbE_{\tilde{z}_k\sim\upmu_k}[\tilde{z}_k])^2 \notag 
\approx \nabla_p \left(p - \frac{1}{N} \sum_{n=1}^N \upz_n \right)^2 \notag 
 = 2(p - \frac{1}{N} \sum_{n=1}^N \upz_n).$ 
To reduce the variance caused by the sample mean, we use the control variate method  \citep{boyle1977options} and write
$
	\nabla_p W^2  \approx 2(p - \frac{1}{N} \sum_{n=1}^N \tilde{z}_n) 
	 = 2\left[(p-\bbE_{z_k\sim\mu_k}[z_k]) + (\bbE_{z_k\sim\mu_k}[z_k]-\frac{1}{N} \sum_{n=1}^N \tilde{z}_n) \right] 
	 \approx \frac{2}{N}\sum_{n=1}^N (z_n -\tilde{z}_n) = \frac{2 \slr}{ N} \sum_{n=1}^N \nabla_{z} f(z_n). $
Thus, we have derived the pWGF estimator with expectation difference approximation,
which has the same form as a multi-sample version ST estimator \citep{bengio2013estimating}. Parameter gradients for Poisson and Categorical distributions can be derived in a similar way.
%
%
\vspace{-2mm}
\subsection{Proposed estimator: Maximum Mean Discrepancy}
\vspace{-2mm}
A more principled way to approximate the Wasserstein distance is to use Maximum Mean Discrepancy (MMD) \citep{gretton2007kernel}:
$\Delta^2(\mu,\nu) = \bbE_{\vx_1,\vx_2 \sim \mu} [K(\vx_1,\vx_2)] + \bbE_{\vy_1,\vy_2 \sim \nu}[ K(\vy_1,\vy_2)]  -2 \bbE_{\vx \sim \mu, \vy \sim \nu} [ K(\vx,\vy)]$,
where $K(\cdot,\cdot)$ is a selected kernel.
In practice, instead of minimizing $W(\mu,\upmu_k)$, we can minimize the empirical expectation $\Delta^2(\mu,\upmu) \approx \bbE_{z_1,z_2 \sim \mu} [K(z_1,z_2)] + \frac{1}{N^2} \sum_{n,n'=1}^N K(\upz_n,\upz_{n'}) - 2\frac{1}{N} \sum_{n=1}^N \bbE_{z \sim \mu} K(z,\upz_n)$.
Details on parameter gradients $\nabla_\vtheta[{\Delta^2}]$ are shown in the supplement.
\vspace{-2mm}
\section{Experiments}
\vspace{-2mm}
We demonstrate the advantage of pWGF on updating Poisson distributions, and show the benchmark performance with a binary latent model in the supplement. 
Since the only difference between our pWGF version ST and the original ST is the learning rate scalar, if not specifically mentioned, we call pWGF-ST or the original ST together as ST, and call our MMD version method as pWGF.
%
%
%

\vspace{-2mm}
\subsection{Poisson Parameter Estimation}
We apply pWGF to infer the parameter of a one-dimensional Poisson distribution. We use the  true distribution $p(z) = \text{Pois}(\lambda_0=5)$ to generate data samples $\{z_i\}_{i=1}^N$, and use a Generative Adversarial learing framework to learn model parameters. A {\it generator} $q_\lambda(z)$ is constructed as $z \sim \text{Pois}(\lambda)$. A {\it discriminator} $w(z)$ is a network used to distinguish true/fake samples, which outputs the probability that the data comes from the true distribution.  
During the adversarial training, the generator aims to increase $\bbE_{z \sim q_\lambda} [w(z)]$, while the discriminator tries to decrease  $\bbE_{z \sim q_\lambda} [w(z)]$ and increase 
$\bbE_{z \sim p}[w(z)]$. We can rewrite the training process as a min-max game with objective function:
    $\max_{\lambda} \min_{w} \left\{ \bbE_{z \sim q_\lambda} [w(z)] -\bbE_{z \sim p}[w(z)] \right\}.$
Similar to the observation in \citep{goodfellow2014generative}, the training process should finally converges to $\lambda^* = \lambda_0 = 5$.
Therefore, for the generator, learning $\lambda$ becomes optimizing 
$\mathbb{E}_{z \sim q_\lambda}[w(z)]$.
%
%
We compare our pWGF against ST, Reinforce and Muprop~\citep{gu2015muprop} and show the learning curves on estimation in Figure~\ref{fig:Poisson_gan}. 
 pWGF converges faster than others and exhibits much smaller oscillation. In Table~\ref{mean-std}, We report the mean and the standard derivation of the inferred parameter $\lambda$ after $100$ training epochs, where our pWGF exhibits higher inference accuracy and lower variance. 
\begin{minipage}{\textwidth}
\begin{minipage}{0.6\textwidth}
	\centering
	\includegraphics[width=0.5\textwidth]{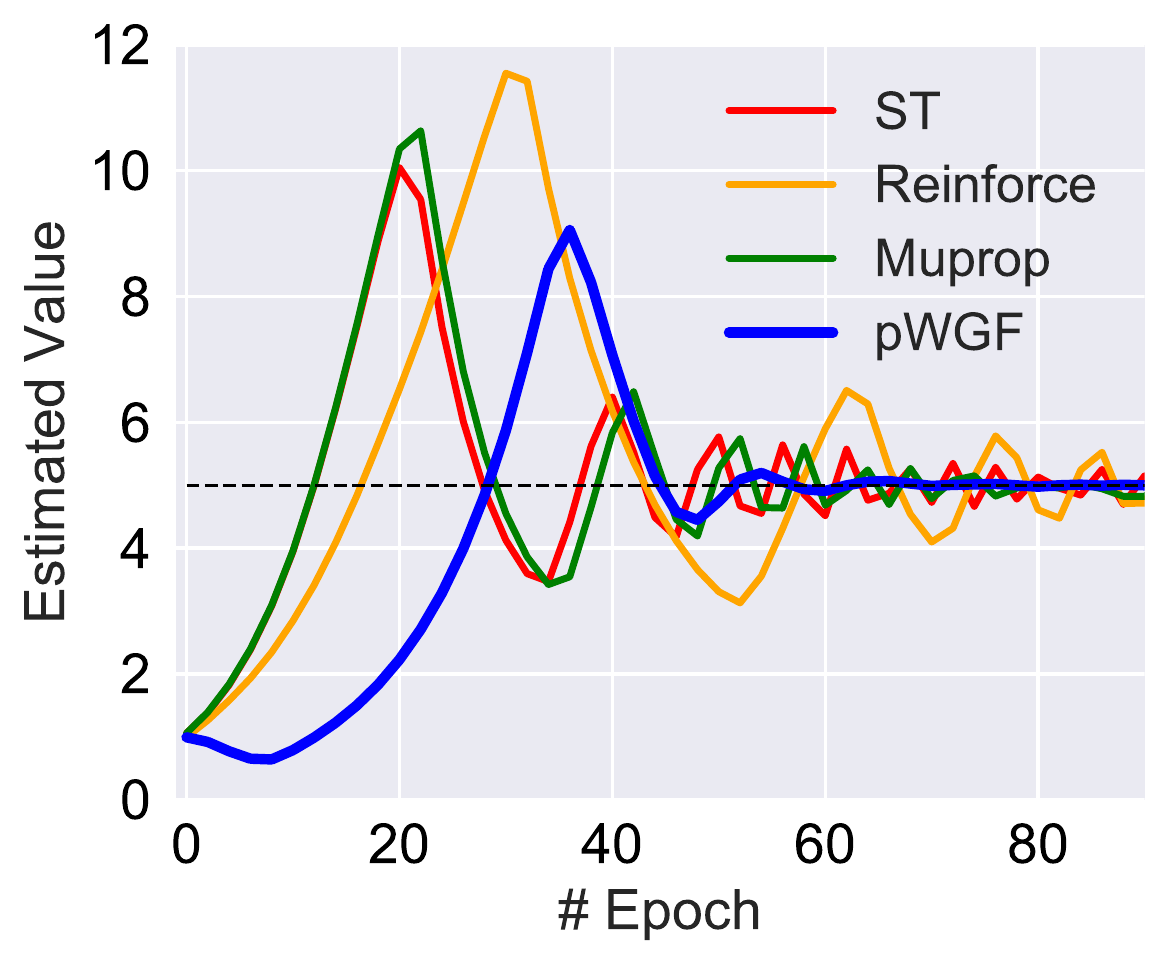}
	\vspace{-2mm}
	\captionof{figure}{Learning curves of Poisson parameter.}
	\label{fig:Poisson_gan}
\end{minipage}
\begin{minipage}{.28\textwidth}
  \centering
  \captionof{table}{Mean and Standard Derivation of Inference}
    \begin{tabular}{l|r|r}
          & \multicolumn{1}{l|}{Mean} & \multicolumn{1}{l}{Std} \\
    \hline
    pWGF &    5.0076   & 0.013 \\
    ST &     5.1049  &  0.161 \\
    Muprop &      5.0196 &  0.159\\
    Reinforce &    4.9452   & 0.173 \\
    \end{tabular}%
  \label{mean-std}%
\end{minipage}
\end{minipage}

\vspace{-2mm}
\section{Conclusion}
\vspace{-2mm}
We presented a theoretical foundation to justify the superior empirical performance of Straight-Through (ST) estimator for backpropagating gradients through discrete latent variables. Specifically, we show that ST can be interpreted as the simulation of the projected gradient flow on Wasserstein space. Based upon this theoretical framework, we further propose another gradient estimator for learning discrete variables, which exhibits even better performance while applied to distributions with infinite support, \emph{e.g.}, Poisson. 

\bibliographystyle{plain}
\bibliography{pWGF_reference}

\clearpage
\appendix

\section{Background}
%
%
To minimize the expected cost $\bbE_{\vz \sim p_\vtheta}[f(\vz)]$ in \eqref{eq:obj}, we assume that $\mathbb{E}_{\vz\sim p_\vtheta}[\nabla_\vtheta f(\vz)] = 0$, if the cost function $f(\vz)$ depends of $\vtheta$.
%
For instance, in the Variational Autoencoder (VAE) \citep{doersch2016tutorial}, we seek to maximize the Evidence Lower Bound (ELBO) as $\mathbb{E}_{\vz\sim q_\vtheta(\vz|\vx)}
[f(\vz)]$, where $f(\vz) = \log[p(\vx|\vz)p(\vz)/q_\vtheta(\vz|\vx)]$ depends on parameter $\vtheta$ through the variational posterior approximation $q_\vtheta(\vz|\vx)$.
Since $\mathbb{E}_{\vz\sim q_\vtheta(\vz|\vx)} [\nabla_\vtheta \log q_\vtheta(\vz|\vx)] = 0$, we have $\mathbb{E}_{\vz\sim q_\vtheta(\vz|\vx)}[\nabla_{\vtheta} f(\vz)] = 0$.

As described above, there are two types of updating methods for $\vtheta$ under \eqref{eq:obj}, namely, estimation of the parameter gradient $\nabla_\vtheta \bbE_{\vz\sim p_\vtheta}[f(\vz)]$, and continuous relaxation of the discrete variable $\vz$.

\subsection{Continuous relaxation}
Another approach used to obtain updates for $\vtheta$ in \eqref{eq:obj} is to approximate samples of $\vz$ from a deterministic function, $h(\cdot)$, of $\vtheta$ and an independent random variable $\bm{\epsilon}$ with simple distribution $p_{\bm{\epsilon}}$, \emph{e.g.}, uniform or normal, so $\vz=h(\vtheta,\bm{\epsilon})$.
Then we can use the chain rule to derive the gradient of \eqref{eq:obj} as
%
%
\begin{equation*}
	\nabla_{\vtheta} \bbE_{p_\vtheta}[f(\vz)] = \nabla_{\vtheta} \bbE_{p_{\bm{\epsilon}}} [f(h(\vtheta,\bm{\epsilon}))] = \bbE_{p_{\bm{\epsilon}}}[\nabla_\vtheta f(h(\vtheta,\bm{\epsilon}))].
\end{equation*}
 We can take expectation of the gradients, which is very convenient because $\nabla_\vtheta$ can be computed by chain rule, noting that $f(\cdot)$ does not directly depend of $\vtheta$. 
This reparameterization trick works quiet well when $\vz$ originates from a continuous distribution.
For example, given a normal distribution, $\vz\sim N(\vmu,{\rm diag}(\vsigma^2))$, we can rewrite $\vz = \vmu+{\rm diag}(\vsigma)N(0,\bm{I})$ and directly obtain $\nabla_{\vmu}{\vz}$ and $\nabla_{\vsigma}{\vz}$.
This reparameterization has been widely used in the training of variational autoencoder with latent Gaussian priors \citep{kingma2014auto,rezende2014stochastic}.

In the discrete case, it becomes very difficult to find a differentiable deterministic function to generate samples from $\vz$.
For the categorical distribution, \citep{jang2017categorical} introduced the Gumbel-Softmax distribution to relax the \emph{one-hot} vector encoding commonly used for categorical variables.
For the multidimensional (factorized) Bernoulli distribution with parameter $\vtheta=\vp$, the Straight Through (ST) estimator \citep{hinton2012neural,bengio2013estimating}, which considers the gradient of $N$ samples of $\vz$ directly, as the gradient of parameter $\nabla_\vtheta f$, can be also explained by setting the derivative $\nabla_{\vp}h$ of the discrete function $\vz=h(\vp,\bm{\epsilon})= \bm{1}_{\bm{\epsilon} > \vp}$ (coordinate-wise) directly to the identity matrix $\bm{I}$ \citep{bengio2013estimating}.

\subsection{Wasserstein gradient flow}
%
Wasserstein gradient flows (WGF)~\citep{chen2018unified,ambrosio2008gradient, villani2008optimal} have become popular in machine learning, due to its generality over parametric distribution families, and tractable computational efficiency.
The Wasserstein space is a metric space of distributions.
The WGF defines a family of steepest descending functions. It has been Bayesian inference, where the KL divergence of an approximating distribution to a target one is minimized by simulating its gradient flow.
\cite{chen2018unified} developed a unfnied framework to simulate the WGF, including Stein Variational Gradient Descent (SVGD)~\citep{liu2016stein,liu2017stein}
and Stochastic Gradient MCMC as its special cases.
\cite{chen2017particle} and  \cite{liu2018accelerated} proposed an acceleration framework for these methods.
WFGs have also been applied to deep generative models~\citep{chen2017continuous} and policy optimization in reinforcement learning~\citep{zhang2018policy}. However, all previous methods focus on simulating WGFs to approximate distributions in continuous domains. There has been little if any research reported for WGFs for discrete domains.


\section{Updating via Wasserstein gradient flow}\label{sc:wgf}
Gradient computation and Wasserstein Gradient Flow (WGF) simulation are made possible by the Riemannian structure of $\tcalM$, which consists of a proper inner product in the tangent space that is consistent with the Wasserstein distance~\citep{benamou2000computational, otto2001geometry}.
The tangent space of $\tcalM$ at $\mu$ can be represented by a subspace of vector fields on $\bbR^d$ (\cite{villani2008optimal}, Thm 13.8; \cite{ambrosio2008gradient}, Thm 8.3.1, Prop 8.4.5):
$$T_{\mu}\tcalM := \overline{\{\nabla\varphi: \varphi \in C_c^{\infty}(\bbR^d)\}}^{L^2(\mu;\bbR^d)},$$
where $\calC_c^{\infty}(\bbR^d)$ is the set of compactly supported smooth functions on $\bbR^d$, $L^2(\mu;\bbR^d): = \{v : \int_{\bbR^d} v(\vz)\trs v(\vz) \mu(\ud \vz) < +\infty\}$ is a Hilbert space with inner product $\langle v, u \rangle_{L^2(\mu;\bbR^d)}:= \int v(\vz)\trs u(\vz) \mu(\ud \vz)$, and the overline represents taking the closure in ${L^2(\mu;\bbR^d)}$.

With the inner product inherited from $L^2(\mu;\bbR^d)$, $\tcalM$ being a Riemannian manifold is consistent with the Wasserstein distance due to the Benamou-Brenier formula~\citep{benamou2000computational}.
We can then express the gradient of a function on $\tcalM$ in the Riemannian sense.
The explicit expression is intuitively proposed as Otto's calculus~(\cite{otto2001geometry}; \cite{villani2008optimal}, Chapter~15) and rigorously verified by subsequent work, \emph{e.g.}, \cite{villani2008optimal}, Thm 23.18; \cite{ambrosio2008gradient}, Lem 10.4.1.
Specifically, they showed that given a functional $F[\mu]= \bbE_{\vz\sim\mu}[f(\vz)]$ with $f(\cdot)\in\calC^{\infty}_c(\bbR^d)$, its gradient is $\nabla_{\tcalM} F[\mu] = \nabla f \in T_{\mu}\tcalM$, a vector field on $\bbR^d$.
This means that we can, in principle, compute the desired gradient $\nabla_{\tcalM} F[\mu]$ using  $\nabla f$.

Another convenient property of $\tcalM$ based on the physical interpretation of tangent vectors on $\tcalM$ makes the gradient flow simulation possible.
Consider a smooth curve of absolutely continuous measures, $\mu_{t}$, with corresponding tangent vector $\vv_{t}$, where $t\in\bbR$, and for which the gradient flow is simulated (iteratively) at discrete values $k=1,\ldots,k,k+1,\ldots$, to estimate $\mu_1,\ldots,\mu_k,\mu_{k+1},\ldots$ (the target distribution).
For any $s\in\bbR$ and $\varepsilon\to 0$, Proposition 8.4.6 of~\cite{ambrosio2008gradient} guarantees that $W(\mu_{s+\varepsilon}, (\id + \varepsilon \vv_s)_{\#}\mu_s) = o(|\varepsilon|)$, where $(\id+\varepsilon \vv_s)$ is a transformation on $\bbR^d$ ($\id$ is the identity map and $\vv_s$ is a vector field on $\bbR^d$), and $(\id + \varepsilon \vv_s)_{\#}\mu_s$ is the pushed-forward measure of $\mu_s$ that moves $\mu_s$ along the tangent vector $\vv_s$ by distance $\varepsilon$, see Figure \ref{updating_algorithm}.
When $\mu_t$ is a gradient flow (steepest descending curve) of $F[\cdot]$ defined in the form above, $\vv_t = -\nabla_{\tcalM} F[\mu_t] = -\nabla f$, as described before, then for $\mu_k := \mu_s$ having a set of samples $\{\vz_n\}_{n=1}^N$ and the definition of pushed-forward measure \citep{ambrosio2008gradient}, $\{\tilde{\vz}_n := \vz_n - \slr\nabla f(\vz_n)\}_{n=1}^N$ is a set of samples of $\upmu_k := (\id + \slr \vv_s)_{\#}\mu_s$, which conform a first-order approximation of $\tilde{\mu}_{s+\slr}$.
Since $\tilde{\mu}_{k}\in\tcalM$ is a good approximation of $\mu_{s+\varepsilon}\in\tcalM$ (the optimal measure along the WGF) as discussed above, thus we can use $\tilde{\mu}_k\in\tcalM$ to approximate $\mu_{k+1}\in\calM$. This is done by projecting $\tilde{\mu}_k\in\tcalM$ onto $\mu_{k+1}\in\calM$.
Then, per Theorem \ref{converge-thm}, with small enough positive $\slr$, we can always get a set of samples whose distribution improves $F[\cdot]$, the functional of the cost in \eqref{eq:obj}.

\section{Proofs}
\begin{theorem}\label{theoretical-projection}
Let $F[\cdot]$ be a differentiable function on a manifold $\tcalM$ and $\calM$ a submanifold of $\tilde{\calM}$, $\calM \subset \tcalM$, then at any $\mu \in \calM$,
\begin{align*}
	\nabla_\calM F = (\nabla_{\tilde{\calM}} F)^{\perp},
\end{align*}
where $(\nabla_{\tilde{\calM}} F)^{\perp}$ is the projection of $ \nabla_{\tilde{\calM}} F$ onto $T_\mu \calM$. \\
\end{theorem}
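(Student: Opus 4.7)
The plan is to invoke the defining variational property of Riemannian gradients and then reduce everything to an orthogonal decomposition on the tangent space. Recall that on any Riemannian manifold $\calN$, the gradient of a smooth function $G$ at $\mu$ is the unique element of $T_\mu\calN$ satisfying $\langle \nabla_\calN G, v\rangle_\mu = dG_\mu(v)$ for every $v \in T_\mu\calN$. I would apply this characterization to both $F$ on $\tcalM$ and its restriction $F|_\calM$ on $\calM$, and then exploit that $T_\mu\calM$ sits isometrically inside $T_\mu\tcalM$.

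First I would note two structural facts from the submanifold assumption. Since $\calM \subset \tcalM$ as a Riemannian submanifold, $T_\mu\calM$ is a closed linear subspace of $T_\mu\tcalM$ whose inner product is the restriction of the ambient one. Moreover, any smooth curve in $\calM$ through $\mu$ is also a curve in $\tcalM$ through $\mu$, so the chain rule gives $d(F|_\calM)_\mu(v) = dF_\mu(v)$ for every $v \in T_\mu\calM$. Thus the defining identities for the two gradients agree when tested against directions tangent to $\calM$.

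Next I would decompose the ambient gradient using the orthogonal splitting $T_\mu\tcalM = T_\mu\calM \oplus (T_\mu\calM)^{\mathrm{c}}$, writing $\nabla_{\tcalM} F = u + w$ with $u = (\nabla_{\tcalM}F)^{\perp} \in T_\mu\calM$ the projection onto $T_\mu\calM$ and $w$ orthogonal to $T_\mu\calM$. For any $v \in T_\mu\calM$, the perpendicular piece $w$ contributes nothing, so
\[
\langle \nabla_\calM F, v\rangle_\mu = d(F|_\calM)_\mu(v) = dF_\mu(v) = \langle \nabla_{\tcalM}F, v\rangle_\mu = \langle u, v\rangle_\mu.
\]
Since this holds for every $v \in T_\mu\calM$, the uniqueness of the Riemannian gradient (equivalently, non-degeneracy of the inner product on $T_\mu\calM$) forces $\nabla_\calM F = u = (\nabla_{\tcalM}F)^\perp$, which is the desired identity.

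The genuine subtlety is not the linear-algebra identity above but the infinite-dimensional Riemannian setup on $\tcalM$ (the $2$-Wasserstein space) together with the assumption that the parametric family $\calM$ of discrete distributions is a smooth Riemannian submanifold on which the orthogonal projection is well defined. Granted the structure outlined in Appendix \ref{sc:wgf}, namely the Otto calculus identification of tangent vectors with gradient vector fields and the $L^2(\mu;\bbR^d)$ inner product, the proof carries over verbatim from the finite-dimensional case, so the main work is really in validating this background structure rather than in the theorem's own argument.
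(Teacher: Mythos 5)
Your proof is correct and takes essentially the same route as the paper's: both characterize $\nabla_\calM F$ and $\nabla_{\tcalM}F$ by the defining variational identity, test them against vectors $v\in T_\mu\calM$, and use the fact that the orthogonal projection onto $T_\mu\calM$ preserves inner products against vectors in that subspace, concluding by non-degeneracy. You are merely a bit more explicit than the paper about the chain-rule step $d(F|_\calM)_\mu(v)=dF_\mu(v)$ and about the caveat that the real work lies in justifying the Riemannian submanifold structure itself.
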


\begin{proof}[Proof of Theorem \ref{theoretical-projection}]
By the definition of $\nabla_{\calM} F$ \citep{mukherjee2010learning}, for any vector $\bm{v} \in T_\mu {\calM}$, 
\begin{nalign}\label{thm1-eq1}
    \langle \nabla_{\calM} F , \bm{v} \rangle = \bm{v}(F)[\mu].
\end{nalign}
By the definition of $\nabla_{\tcalM} F$, for any vector $\bm{u} \in T_\mu {\tcalM}$, 
\begin{nalign}\label{thm1-eq2}
    \langle \nabla_{\tcalM} F , \bm{u} \rangle = \bm{u}(F)[\mu].
\end{nalign}
Since $T_\mu \calM$ is the subspace of $T_\mu \tilde{\calM}$, by definition of $(\nabla_{\tcalM} F)^{\perp}$ we have \begin{nalign}\label{thm1-eq3}
    \langle \nabla_{\tcalM} F ,\vv \rangle = \langle  (\nabla_{\tcalM} F)^{\perp}, \vv  \rangle.
\end{nalign}

By \eqref{thm1-eq1}, \eqref{thm1-eq2}, \eqref{thm1-eq3}, for any $\vv \in T_\mu {\calM}$
\begin{equation*}
\langle \nabla_{\calM} F , \bm{v} \rangle = \bm{v}(F)[\mu] =  \langle \nabla_{\tcalM} F ,\vv \rangle = \langle  (\nabla_{\tcalM} F)^{\perp}, \vv  \rangle.
\end{equation*}
 Therefore, $\nabla_\calM F = (\nabla_{\tcalM} F)^{\perp}$. 
\end{proof}

\begin{theorem} \label{converge-thm}
Let $\vv=-\slr \nabla_{\tcalM} F [\mu_k]$ and $W(\cdot,\cdot)$ be the 2-Wasserstein distance in $\tcalM$.
Update $\mu_k$ in $\tcalM$ along direction $\vv$ to $\upmu_{k} = \exp_{\mu_k}(\vv)$ (exponential map \citep{mukherjee2010learning}), then project $\upmu_k$ back to $\calM$ as $\mu_{k+1} = \argmin_{\mu \in \calM} W(\mu,\upmu_{k})$.
If  $\nabla_{\tcalM} F$  is Lipschitz continuous, then there exists $r>0$, such that for any $\slr<r$, 
	$F[\mu_k] \geq F[\mu_{k+1}]+ O(\slr^2)$.

\end{theorem}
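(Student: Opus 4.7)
The plan is to decompose $F[\mu_k]-F[\mu_{k+1}]$ into two contributions, the ambient WGF step from $\mu_k$ to $\upmu_k$ inside $\tcalM$, and the metric projection from $\upmu_k$ back onto $\calM$. For the first piece I would invoke a Descent Lemma on $\tcalM$: combining Otto calculus (which gives $\nabla_{\tcalM}F[\mu]=\nabla f$, already used in the paper) with the hypothesis that $\nabla_{\tcalM}F$ is Lipschitz along geodesics yields the standard quadratic upper bound
\begin{equation*}
F[\exp_{\mu}(\bm{u})]\le F[\mu]+\langle\nabla_{\tcalM}F[\mu],\bm{u}\rangle_{L^2(\mu)}+\tfrac{L}{2}\|\bm{u}\|^2_{L^2(\mu)},
\end{equation*}
for any $\bm{u}\in T_{\mu}\tcalM$. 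Setting $\bm{u}=\vv=-\slr\,\nabla_{\tcalM}F[\mu_k]$ gives
\begin{equation*}
F[\upmu_k]\le F[\mu_k]-\slr\|\nabla_{\tcalM}F[\mu_k]\|^2+\tfrac{L}{2}\slr^2\|\nabla_{\tcalM}F[\mu_k]\|^2.
\end{equation*}

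For the projection piece, the minimizing property of $\mu_{k+1}$ together with the admissibility of $\mu_k\in\calM$ gives the key a priori bound
\begin{equation*}
W(\mu_{k+1},\upmu_k)\le W(\mu_k,\upmu_k)=\slr\|\nabla_{\tcalM}F[\mu_k]\|,
\end{equation*}
so $W(\mu_k,\mu_{k+1})\le 2\slr\|\nabla_{\tcalM}F[\mu_k]\|$ by the triangle inequality. Applying the same Descent Lemma on the Wasserstein geodesic from $\upmu_k$ to $\mu_{k+1}$, with initial velocity $\bm{w}$, yields
\begin{equation*}
F[\mu_{k+1}]\le F[\upmu_k]+\langle\nabla_{\tcalM}F[\upmu_k],\bm{w}\rangle_{L^2(\upmu_k)}+\tfrac{L}{2}W^2(\mu_{k+1},\upmu_k).
\end{equation*}
The quadratic term is automatically $O(\slr^2)$. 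For the linear term I would replace $\nabla_{\tcalM}F[\upmu_k]$ by $\nabla_{\tcalM}F[\mu_k]$ at cost $O(\slr)$ in operator norm via the Lipschitz hypothesis, and then use Theorem \ref{theoretical-projection} together with the first-order optimality of the Wasserstein projection, which says that $\bm{w}$ is, up to a higher-order correction, orthogonal to $T_{\mu_{k+1}}\calM$. This reduces the linear term to a second-order remainder.

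Summing the two inequalities gives
\begin{equation*}
F[\mu_{k+1}]\le F[\mu_k]-\slr\|\nabla_{\calM}F[\mu_k]\|^2+C\,\slr^2
\end{equation*}
for a constant $C$ depending on $L$ and $\|\nabla_{\tcalM}F[\mu_k]\|$, which is exactly the claim $F[\mu_k]\ge F[\mu_{k+1}]+O(\slr^2)$ once $\slr<r:=\|\nabla_{\calM}F[\mu_k]\|^2/C$. The main obstacle I anticipate is the last step above: making rigorous the fact that the inner product $\langle\nabla_{\tcalM}F[\mu_k],\bm{w}\rangle$ is $O(\slr)$ in the Wasserstein (rather than Euclidean) setting. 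It requires an infinite-dimensional analogue of the normal-cone condition for nearest-point projection onto submanifolds, plus a justification that $\calM$ is a smooth enough submanifold of $\tcalM$ for the projection itself to be well defined and single-valued for small $\slr$; the rest of the argument is a bookkeeping exercise with the Lipschitz gradient.
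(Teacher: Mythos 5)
The paper never actually proves Theorem \ref{converge-thm}: the appendix states it and then moves straight to the proof of Theorem \ref{joint-eq-marginal}, so there is no ``official'' argument to compare against and your proposal has to stand on its own. Its architecture is the standard and correct one for projected gradient descent on a submanifold --- an ambient descent lemma, the a priori bound $W(\mu_{k+1},\upmu_k)\le W(\mu_k,\upmu_k)=\slr\,\Vert\nabla_{\tcalM}F[\mu_k]\Vert$ from admissibility of $\mu_k$ in the $\argmin$, and a second expansion along the projection leg --- and your target inequality $F[\mu_{k+1}]\le F[\mu_k]-\slr\Vert\nabla_{\calM}F[\mu_k]\Vert^2+C\slr^2$ is the right one (and, via Theorem \ref{theoretical-projection}, the one that actually justifies the scheme).

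There is, however, a genuine flaw in one step: the claim that the linear term $\langle\nabla_{\tcalM}F[\mu_k],\bm{w}\rangle$ ``reduces to a second-order remainder.'' To first order in $\slr$, the displacement $\bm{w}$ from $\upmu_k$ back to its projection is $\bm{w}=\slr\bigl(\nabla_{\tcalM}F[\mu_k]-\nabla_{\calM}F[\mu_k]\bigr)+O(\slr^2)$: the projection undoes exactly the component of the ambient step normal to $T\calM$. Hence $\langle\nabla_{\tcalM}F[\mu_k],\bm{w}\rangle=\slr\,\Vert\nabla_{\tcalM}F[\mu_k]-\nabla_{\calM}F[\mu_k]\Vert^2+O(\slr^2)$, a genuinely first-order, \emph{positive} quantity. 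Orthogonality of $\bm{w}$ to $T_{\mu_{k+1}}\calM$ only annihilates its pairing with the tangential part of the gradient; it cannot make the whole term second order, because the gradient generally has an $O(1)$ normal component. Indeed, if your claim were true you would conclude descent by the full norm $\Vert\nabla_{\tcalM}F\Vert^2$, which is false at any $\mu_k$ where $\nabla_{\calM}F=0$ but $\nabla_{\tcalM}F\neq0$ (there the iteration is stationary to first order, so $F$ cannot drop by $\slr\Vert\nabla_{\tcalM}F\Vert^2$). The correct bookkeeping is that this first-order term cancels the normal share of the ambient gain, since $\Vert\nabla_{\tcalM}F\Vert^2=\Vert\nabla_{\calM}F\Vert^2+\Vert\nabla_{\tcalM}F-\nabla_{\calM}F\Vert^2$, leaving precisely $-\slr\Vert\nabla_{\calM}F[\mu_k]\Vert^2$; so your final display is recoverable, but not by the route you describe. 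The remaining obstacles you flag --- well-posedness and a normal-cone condition for nearest-point projection onto the parametric family $\calM$ inside Wasserstein space, and Lipschitzness along geodesics --- are real, and are presumably why the paper omits the proof altogether.
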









\begin{proof}[Proof of Theorem \ref{joint-eq-marginal}]

  {(1)} First, we show that $W^2(\mu,\nu)\leq \sum_{i=1}^d W^2(\mu_i,\nu_i)$.
  
  Arbitrarily selecting  $\gamma_i \in \Gamma(\mu_i,\nu_i)$, $i=1,\dots,d$, we define 
  $\gamma^* = \prod_{i=1}^d \gamma_i$.  Since $\mu_i(x)= \int \gamma_i(x,\ud y)$, we have
  \begin{nalign}
  \begin{aligned}
      &\int_{\bbR^d} \gamma^* (x_1,\dots,x_d,\ud y_1,\dots,\ud y_d) \\
      =& \int_{\bbR^d} \gamma_1(x_1,\ud y_1)\gamma_2(x_2,\ud y_2)\dots  \gamma_D(x_d,\ud y_d) \\
      =& \prod_{i=1}^d \int_\bbR \gamma_i(x_i,\ud y_i) 
      = \prod_{i=1}^d \mu_i(x_i) = \mu(x_1,x_2,\dots,x_d),
  \end{aligned}
  \end{nalign}
  which means the marginal distribution of $\gamma^*$ on $\vx$ is $\mu$. Similarly, the marginal distribution of $\gamma^*$ on $\vy$ is $\nu$. Therofore, $\gamma^* \in \Gamma(\mu,\nu)$.   Then  
  \begin{nalign}\label{marginal_1}
   \inf_{\gamma \in \Gamma(\mu,\nu)} \int \Vert \bm{x}-\bm{y}\Vert^2 \gamma(\ud \bm{x},\ud \bm{y}) \leq \int \Vert \bm{x}-\bm{y} \Vert^2 \gamma^*(\ud \bm{x},\ud \bm{y}).
  \end{nalign}
  
  On the other hand, 
  \begin{nalign}\label{marginal_2}
  \begin{aligned}
    &  \int  \Vert \bm{x}-\bm{y} \Vert^2 \gamma^*(\ud \bm{x},\ud \bm{y}) \\
    =& \int \sum_{i=1}^d (x_i-y_i)^2 \gamma_1(\ud x_1,\ud y_1) \gamma_2(\ud x_2,\ud y_2) \dots \gamma_D (\ud x_d,\ud y_d) \\
    =& \sum_{i=1}^d  \int (x_i-y_i)^2 \gamma_i(\ud x_i,\ud y_i).
  \end{aligned}
  \end{nalign}
  By \eqref{marginal_1} and \eqref{marginal_2}, we have 
  \begin{nalign}\label{marginal_3}
      W^2(\mu,\nu) \leq \sum_{i=1}^d  \int (x_i-y_i)^2 \gamma_i(\ud x_i,\ud y_i).
  \end{nalign}
  Take the  infimum over both sides of the equation \eqref{marginal_3}, 
  \begin{nalign}
      W^2(\mu,\nu) \leq \sum_{i=1}^d \inf_{\gamma_i \in \Gamma(\mu_i,\nu_i)} \int (x_i-y_i)^2 \gamma_i(\ud x_i,\ud y_i) = \sum_{i=1}^d W^2(\mu_i,\nu_i).
  \end{nalign}

{(2)}  Then we  show $W^2(\mu,\nu) \geq \sum_{i=1}^d W^2(\mu_i,\nu_i).$
  
  Note that  
  \begin{nalign}\label{marginal_4}
  \begin{aligned}
     & \int \Vert \bm{x} - \bm{y}\Vert^2 \gamma(\ud \bm{x},\ud \bm{y}) \\
     =& \int \sum_{i=1}^d (x_i-y_i)^2 \gamma(\ud x_1,\dots,\ud x_d,\ud y_1,\dots,\ud y_d)\\
    = & \sum_{i=1}^d  \int (x_i-y_i)^2 \hat{\gamma}_i (\ud x_i,\ud y_i) ,
   \end{aligned}    
  \end{nalign}
  where $\hat{\gamma}_i(x_i,y_i) = \int \gamma(\ud x_1,\dots,\ud x_{i-1},x_i,\ud x_{i+1},\dots,\ud y_{i-1},y_i,\ud y_{i+1},\dots,\ud y_d)  $ is the marginal distribution of $\gamma$ over $(x_i,y_i)$.

By Fubini's Theorem,
\begin{nalign}
\begin{aligned}
    &\int \hat{\gamma}_i(x_i,\ud y_i)\\
    =&  \int \gamma(\ud x_1,\dots,\ud x_{i-1},x_i,\ud x_{i+1},\dots,\ud y_{i-1},\ud y_i,\ud y_{i+1},\dots,\ud y_d)\\
    =& \int \mu(\ud x_1,\dots,\ud x_{i-1},x_i,\ud x_{i+1},\dots \ud x_d) \ \ \ \    \text{(by $\mu(\bm{x}) = \int \gamma(\bm{x} ,\ud \bm{y})$)} \\
    =& \mu_i(x_i).
\end{aligned}
\end{nalign}
Similarly, $\int \hat{\gamma}_i (\ud x_i,y_i) = \nu_i(y_i)$. Therefore, $\hat{\gamma}_i \in \Gamma(\mu_i,\nu_i)$. Then
\begin{nalign} \label{marginal_5}
   \sum_{i=1}^d  \int (x_i-y_i)^2 \hat{\gamma}_i (\ud x_i,\ud y_i) \geq \sum_{i=1}^d \inf_{\gamma_i \in \Gamma(\mu_i,\nu_i)} \int (x_i-y_i)^2 {\gamma}_i (\ud x_i,\ud y_i).
\end{nalign}
By \eqref{marginal_4} and \eqref{marginal_5},
\begin{nalign}
 \int \Vert \bm{x} - \bm{y}\Vert^2 \gamma(\ud \bm{x},\ud \bm{y})\geq  \sum_{i=1}^d  W^2(\mu_i,\nu_i).
\end{nalign}
Take infimum over both sides, $
    W^2(\mu,\nu) \geq \sum_{i=1}^d W^2(\mu_i,\nu_i).$

Therefore, $W^2(\mu,\nu) =\sum_{i=1}^d W^2(\mu_i,\nu_i) $.
\end{proof}

\begin{proof}[Proof of Remark~\ref{lower_bound_Wd}] 
    For $\mu = \text{Bern}(p)$ and $\nu = \text{Bern}(q)$, 
    \begin{nalign}\label{w_bern}
    \begin{aligned}
    W^2(\mu,\nu) &= \inf_{\{a_{i,j}\}} \sum_{i,j \in \{0,1\}} a_{i,j} (i-j)^2 \\
               &= \min_{\{a_{i,j}\}} a_{1,0}+a_{0,1},
    \end{aligned}
    \end{nalign}
where $\sum_i a_{i,1}=q$, $\sum_j a_{1,j} = p$, and $a_{i,j} \geq 0$, $\sum_{i,j} a_{i,j} =1$. 

Problem in \eqref{w_bern} is a linear programming. It can be shown easily that the minimum value of \eqref{w_bern} is $W^2(\mu,\nu)= |p-q|$. 
\end{proof}

\begin{lemma} \label{bernoulli-other-distance}
 Let $\nu$ be an arbitrary distribution and $\mu= \text{Bern}(p)$ be a Bernoulli distribution. Then 
\begin{nalign}
W^2(\mu,\nu) = \int_{-\infty}^{t^*} y^2 \nu(\ud y) + \int_{t^*}^\infty (y-1)^2 \nu(\ud y),
\end{nalign}
where $t^* = \inf \{t: \int_t^\infty \nu(\ud y) = p\}$.
\end{lemma}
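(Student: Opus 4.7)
The plan is to exploit the atomic structure of $\mu$: since $\mu = \text{Bern}(p)$ is supported on $\{0,1\}$, every coupling $\gamma \in \Gamma(\mu,\nu)$ is supported on $\{0,1\}\times\bbR$ and admits a unique disintegration $\gamma = \delta_0\otimes \nu_0 + \delta_1\otimes \nu_1$, where $\nu_0,\nu_1$ are nonnegative measures on $\bbR$ satisfying $\nu_0+\nu_1=\nu$, $\nu_0(\bbR)=1-p$, and $\nu_1(\bbR)=p$. Conversely, any such pair $(\nu_0,\nu_1)$ corresponds to a valid coupling. Hence
\begin{equation*}
W^2(\mu,\nu) \;=\; \inf_{\substack{\nu_0+\nu_1=\nu \\ \nu_1(\bbR)=p}} \left[\int y^2\,\nu_0(\ud y)+\int (y-1)^2\,\nu_1(\ud y)\right],
\end{equation*}
so the problem reduces to choosing which portion of $\nu$ is shipped to $0$ and which to $1$.

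Next I would identify the optimal split as the threshold rule: $\nu_1^* = \nu|_{[t^*,\infty)}$ and $\nu_0^* = \nu|_{(-\infty,t^*)}$, where $t^*$ is the level in the lemma that makes the upper tail carry exactly mass $p$. The justification is an exchange argument based on the cost differential $(y-1)^2-y^2 = 1-2y$, which is strictly decreasing in $y$. Therefore, sending a larger value to $1$ (rather than $0$) is always cheaper than sending a smaller value there. If an admissible decomposition $(\nu_0,\nu_1)$ ever places $\nu_0$-mass at some $y_1$ and $\nu_1$-mass at some $y_2<y_1$, transferring an infinitesimal amount of these two pieces across (sending $y_1$ to $1$ and $y_2$ to $0$) preserves the marginal constraints while strictly decreasing the cost. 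Iterating this swap and passing to the limit shows that the infimum is attained by the threshold decomposition, at which point the cost integrals simplify to the formula in the statement.

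The main obstacle is the atomic case: if $\nu$ has a point mass at $t^*$, the threshold $t^*$ must be defined via the infimum $\inf\{t:\int_t^\infty \nu(\ud y)=p\}$ precisely so that the atom at $t^*$ can be split between $\nu_0^*$ and $\nu_1^*$ to meet the mass constraints exactly. I would handle this by first showing $t^*$ is well defined and finite whenever $\nu$ has finite second moment (using right-continuity of $t\mapsto \nu([t,\infty))$), then verifying that any admissible split of the atom at $t^*$ yields the same total cost as the expression in the lemma (since in that expression the contribution from the atom at $t^*$ enters with coefficients $(t^*)^2$ and $(t^*-1)^2$ weighted by whichever fraction is assigned to each side, and the exchange argument shows no other split can do better). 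Once this technicality is settled, the claimed identity follows immediately from the evaluation of the transport cost at $(\nu_0^*,\nu_1^*)$.
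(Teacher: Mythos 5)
The paper does not actually include a proof of Lemma~\ref{bernoulli-other-distance} (only the adjacent Remark for two Bernoullis is proved, via a small linear program), so your proposal cannot be compared against an ``official'' argument; on its own merits, your route is the natural and correct one. The disintegration $\gamma=\delta_0\otimes\nu_0+\delta_1\otimes\nu_1$ with $\nu_0+\nu_1=\nu$, $\nu_1(\bbR)=p$ is exactly right and reduces the problem to an optimal split of $\nu$, and the cost differential $(y-1)^2-y^2=1-2y$ is the correct driver of the threshold rule. One step should be tightened: ``iterating the swap and passing to the limit'' is not a rigorous argument for general (possibly nonatomic) $\nu$. It is cleaner to compare any admissible $\nu_1$ directly with the threshold choice $\nu_1^*$: the cost difference is $\int(1-2y)\,(\nu_1-\nu_1^*)(\ud y)$, and since $\nu_1-\nu_1^*$ has total mass zero, is $\le 0$ on $(t^*,\infty)$ and $\ge 0$ on $(-\infty,t^*)$, while $1-2y$ is decreasing, this integral is bounded below by $(1-2t^*)\int(\nu_1-\nu_1^*)(\ud y)=0$. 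That one line replaces the exchange-and-limit step.

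You are also right that atoms at the quantile are the delicate point, but your resolution is slightly off. The map $t\mapsto\nu([t,\infty))$ is left-continuous, not right-continuous, and more importantly the set $\{t:\int_t^\infty\nu(\ud y)=p\}$ can be \emph{empty} when $\nu$ has an atom straddling the $p$-quantile (e.g.\ $\nu=\Bern(q)$ with $0<p<q$ gives tail masses only in $\{0,q,1\}$), so $t^*$ as literally defined in the lemma does not exist in precisely the cases where the atom must be split; the definition should use $\le p$ (or $\ge p$) rather than $=p$, and the displayed formula should then assign only the residual fraction of the atom at $t^*$ to each integrand, as you describe. This is a defect of the lemma's statement rather than of your argument --- indeed, taking $\nu=\Bern(q)$ the corrected version recovers $W^2=|p-q|$ as in Remark~\ref{distance-bernoulli}, while the literal formula does not --- but a complete proof must state the corrected threshold explicitly rather than assert that $t^*$ is always well defined.
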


\section{Gradient For MMD Projection}
We take the radial basis function kernel $K(x,y) = \exp(-\frac{(x-y)^2}{2h^2})$ for instance. 

For Bernoulli distribution, $\mu =\text{ Bern}(p)$, $\frac{\partial \Delta^2}{\partial p} = 2[(1-2p)(1-K(1,0))-\frac{1}{n}\sum_{i=1}^n (K(1,\upz_i) -K(1,\upz_j))] $

\section{Binary Latent Models}
As most of previous proposed algorithms are specifically designed for the discrete variables with finite support, we consider using a binary latent model as the benchmark.
We use variational autoencoder (VAE) \citep{kingma2014auto} with the Bernoulli latent variable (Bernoulli VAE).
We compare pWGF with the baseline methods ST and Gumbel-Softmax~\citep{jang2017categorical}, as well as three state-of-the-art algorithms: Rebar~\citep{tucker2017rebar}, Relax~\citep{grathwohl2017backpropagation} and ARM~\citep{yin2018arm}.
Following the settings in \citep{yin2018arm}, we build the model with different network architectures. 
We apply all methods and architectures to the MNIST dataset, and show the results in Table \ref{bern-vae}.
From the results, pWGF is comparable with ST, and
both pWGF/ST outperform  other competing methods except ARM in all tested network architecture.


\begin{table}[htbp]
  \centering
  \caption{Testing ELBO for Bernoulli VAE on MNIST}
    \begin{tabular}{r|rrrrrr}
          & pWGF  & ST    & ARM   & RELAX & REBAR & Gumbel-Softmax \\
    \hline
    Linear & 119.8 & 119.1 & 110.3 & 122.1 & 123.2 & 129.2 \\
    Two Layers & 108.3 & 107.6 & 98.2  & 114   & 113.7 & NA \\
    Nonlinear & 104.6 & 104.2 & 101.3 & 110.9 & 111.6 & 112.5 \\
    \end{tabular}%
  \label{bern-vae}%
\end{table}%



\end{document}